\newcommand{\widesim}[2][1.5]{
  \mathrel{\overset{#2}{\scalebox{#1}[1]{$\sim$}}}
}
\title{Performance Bounds for Pairwise Entity Resolution}
\author{
Matt Barnes\\
School of Computer Science\\
Carnegie Mellon University\\
Pittsburgh, PA 15213 \\
\texttt{mbarnes1@cs.cmu.edu} \\
\And
Kyle Miller \\
School of Computer Science\\
Carnegie Mellon University\\
Pittsburgh, PA 15213 \\
\texttt{mille856@andrew.cmu.edu} \\
\AND
Artur Dubrawski \\
School of Computer Science\\
Carnegie Mellon University\\
Pittsburgh, PA 15213 \\
\texttt{awd@cs.cmu.edu} \\
}
\begin{document}

\maketitle

\begin{abstract}
One significant challenge to scaling entity resolution algorithms to massive datasets is understanding how performance changes after moving beyond the realm of small, manually labeled reference datasets. Unlike traditional machine learning tasks, when an entity resolution algorithm performs well on small hold-out datasets, there is no guarantee this performance holds on larger hold-out datasets. We prove simple bounding properties between the performance of a match function on a small validation set and the performance of a pairwise entity resolution algorithm on arbitrarily sized datasets. Thus, our approach enables optimization of pairwise entity resolution algorithms for large datasets, using a small set of labeled data. 
\end{abstract}

\section{Introduction}
Entity resolution (ER) is the task of identifying records belonging to the same entity (e.g.\ individual, product) across one or multiple datasets. Ironically, it has multiple names: deduplication and record linkage, among others \cite{Getoor2012}. For example, ER is used to disambiguate shopping products \cite{Menestrina2010}, merge datasets of users from disparate sources, or even profile potential terrorist threats. With the use of blocking techniques, entity resolution can be scaled to many millions of records \cite{Papadakis2013}.


The canonical example in Table \ref{tab:canonical} illustrates the usefulness of pairwise ER for these application domains. Initially, the match function may only predict $r_1 \approx r_2$ using the common phone number, where $\approx$ denotes a match. A partial name may not be a strong enough commonality to predict  either of these individually match $r_3$. However, the merge of these records $\left<r_1, r_2\right>$, where $\left< \; \; \right>$ denotes a merge, provides the full name `John Doe' and enables correctly merging all three records.

To design an effective entity resolution system, one would optimize over the ER merge and match functions. 
One might be tempted to evaluate and optimize an ER system on a small dataset with known labels, and then extend this to real-world applications. We stress that performance on small datasets does not necessarily imply similar performance on large datasets. Unlike more traditional machine learning tasks, in ER applications the number of entities often scales linearly with the size of the dataset \cite{Getoor2012}. This is not true in other clustering problems, where the number of clusters is typically constant or sublinear with the dataset size -- a significantly easier problem.
Further, the `no negative evidence' assumption \cite{Getoor2012, Benjelloun2009} can cause a `snowball effect,' wherein several false positives trigger many more clusters to merge, leading to a detrimental degradation in performance. 

Consider the simple example in Figure \ref{fig:synthetic-degredation}, using synthetic data described in Section~\ref{sec:datasets}. First, we learned a match function using a small training dataset of 100 records. On a test dataset of comparable size, it achieved near perfect pairwise precision and recall. However, as we added new entities to the test dataset, pairwise precision significantly degraded -- an extreme example of the entire dataset snowballing into a single entity. More importantly, near perfect performance on the larger datasets \emph{was} possible (dotted line), just with different match function parameters. Using our approach to instead optimize over the larger dataset's estimated lower bound dramatically improves performance on the large set (solid line). 

\begin{table}
\centering
\caption{Canonical Entity Resolution Example}
\label{tab:canonical}
\begin{tabular}{|c |l |l |l |} \hline
Record&Name1&Name2&Phone\\ \hline
$r_1$ & John & D. & 377-8328\\ \hline
$r_2$ & J. & Doe & 377-8328\\ \hline
$r_3$ & John & Doe &  \\
\hline\end{tabular}
\end{table}

\begin{figure}[t]
\begin{center}
        \includegraphics[width=0.5\linewidth]{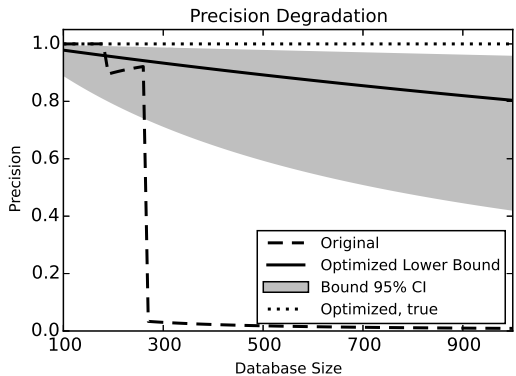}
 \end{center}
        \caption{A simple experiment demonstrates the potential degradation of pairwise precision as the size of the dataset increases. Here the `Original' algorithm (dashed line) was tuned for optimal performance on a training set of 100 records. `Optimized Lower Bound' (solid line) shows our results after instead optimizing model parameters over the larger dataset's estimated lower bound. `True' (dotted line) shows the actual performance corresponding to this lower bound.}
        \label{fig:synthetic-degredation}
\end{figure}

Although performance on a small labeled dataset does not directly equate to performance on an actual larger dataset, some useful information does exist which we will leverage into an estimated lower bound for ER performance on arbitrarily sized problems. Then, optimization of the estimated lower bound allows tuning of pairwise ER systems for large datasets. 

In this paper, our contributions are:
\begin{enumerate}
\item{Theoretical Performance Bounds:} 
We prove simple, estimated, lower bounds on pairwise recall, precision, and $F_1$ performance metrics for arbitrarily sized datasets, under reasonable assumptions and given a small number of labeled record pairs.
\item{Empirical Tightness:} 
We evaluate the bounds on one synthetic and three real world datasets to demonstrate the theoretical bounds are tight to the true performances.
\item{Optimal Merge Function:} 
Given any match function, we prove a lower-bound optimal merge function and `wrapper' for the match function. This conservative strategy is equivalent to finding all connected components, a key insight of the simple bounds.
\end{enumerate}

The remainder of the paper is organized as follows. We begin section \ref{sec:related} with a quick overview of related work in the field of entity resolution. In sections \ref{sec:bounds} and \ref{sec:optimal}, we derive the estimated lower bounds and optimal merge function, respectively. Lastly, in section \ref{sec:results} we demonstrate the empirical tightness of the bound on real world datasets.

\section{Related Work} \label{sec:related}
Entity resolution encompasses a broad set of approaches, including many adapted from the machine learning, optimization, and graph theory domains. Strategies appropriate for ER includes hierarchical clustering \cite{Bilenko2005}, integer linear programming \cite{Ailon2008}, latent Dirichlet allocation \cite{Bhattacharya2007}, pairwise match/merge \cite{Benjelloun2009}, Markov logic \cite{Singla2006} and hybrid human-machine systems \cite{Wang2012}. Pairwise entity resolution approaches are appealing because they use an intuitive and easy to implement iterative match and merge process between pairs of records. Further, under certain assumptions, pairwise algorithms will perform the optimal number of record comparisons \cite{Benjelloun2009}.

Perhaps the most general framework for pairwise entity resolution was presented by Benjelloun et al. \cite{Benjelloun2009}. They outlined a theoretically disciplined approach, wherein certain properties of the match and merge function guarantee a deterministic output in the optimal number of record comparisons. We explore the use of some of these properties in the derivation of our bounds. Collectively, these properties are referred to by their acronym ICAR:

\begin{enumerate}
\item{Idempotence:} 
$\forall r, r \approx r$ and $\left<r,r\right> = r$.
\item{Commutativity:} 
$\forall r_1, r_2, r_1 \approx r_2$ iff $r_2 \approx r_1$, and if $r_1 \approx r_2$, then $\left< r_1, r_2 \right> = \left< r_2, r_1 \right>$.
\item{Associativity:} $\forall r_1, r_2, r_3$ such that $\left<r_1, \left<r_2, r_3\right>\right>$ and $\left<\left<r_1, r_2\right>, r_3\right>$ exist, $\left<r_1, \left< r_2, r_3 \right>\right> = \left<\left<r_1, r_2\right>, r_3\right>$.
\item{Representativity:} If $r_3 = \left<r_1, r_2\right>$ then for any $r_4$ such that $r_1 \approx r_4$, we also have $r_3 \approx r_4$.
\end{enumerate}

The first three properties are straightforward and reasonable to assume for most ER systems. The crux of determinism falls on the final property, representativity. We, too, will take advantage of this convenient property, leaving the interesting problem of how relaxing this assumption affects the performance bounds for future work. Intuitively, representativity means merging any two records can only monotonically increase their chance of matching with other records. This is also referred to as the `no negative evidence' clause. 


\section{Lower Bounds of Performance} \label{sec:bounds}
Although many metrics exist to evaluate entity resolution performance when a ground truth dataset is available, this is rarely the case. Not surprisingly, human-generated clusterings rarely number beyond a thousand records \cite{Menestrina2010} -- a relatively easy ER problem. Even finding publicly available datasets with ground truth so that we could objectively evaluate our results was a trying task.

In the simplest setting, we assume we have access to some pairs with known binary match/mismatch label $y$, such that $x \scalerel*{\widesim{iid}}{j}p(x|y)$. 
For large datasets, finding all records belonging to one entity is a worst-case combinatorial problem, but finding just two matching records is relatively easy using a hybrid human-machine system \cite{Wang2012} or with strong features (e.g.\ phone number, product ID)

With both match and mismatch pairs at our disposal, we created a training and validation set of labeled pairs. The remaining records form the test dataset. Note the training and validation sets will likely have significantly different class balance, cluster sizes, and overall number of samples than the test set. Though an entity resolution algorithm may perform well on the validation set with few samples and small cluster sizes, this may not indicate strong performance on the full dataset with millions of records and many more clusters. In practice, a developer needs to know performance guarantees of the test set because this is the deployed system. 

Here, we derive precise relationships between the performance of the match function on the validation record pairs and estimated lower bounds on ER pairwise precision, recall, and $F_1$ on the test set. Our notation for the following proofs, which the reader may find convenient to refer back to, is:

\begin{eqlist}

\item[$\left<r_i, r_j \right>$]
Record formed by merging records $r_i$ and $r_j$.

\item[$V$]
Set of validation record pairs with known labels, $V = \{(r_1, r_1'), \dotsc, (r_m, r_m')\}$.

\item[$V_S$]
Set of record pairs in the validation set with positive label, \\$V_S = \{(r_i, r_i'): y_i=1$, $\forall (r_i, r_i') \in V\}$.

\item[$V_M$]
Set of record pairs in the validation set that are predicted to directly match, \\$V_M = \{(r_i, r_i'): r_i \approx r_i'$, $\forall (r_i, r_i') \in V\}$.

\item[$T$]
Set of test records $\{r_1, \dotsc, r_n\}$.

\item[$T_M$]
Set of record pairs in the test set that are predicted to directly match, \\$T_M=\{\left(r_i,r_j\right): r_i \approx r_j, i<j$, $\forall r_i, r_j \in T\}$.

\item[$R$]
Set of record pairs in the entity resolution clustering of the test set.

\item[$S$]
Set of record pairs in the true clustering of the test set (unknown).

\item[$Prec(R,S)$]
Precision of predicted and true positive pairs, $Prec(R,S) = |R \cap S|/|R|$.

\item[$Recall(R,S)$]
Recall of predicted and true positive pairs, $Recall(R,S) = |R \cap S|/|S|$.

%

\item[$C_{V}$]
Class balance of pairs in the validation set, $C_{V} = |V_S|/|V|$.

\item[$C_{T}$]
Estimated class balance of pairs in the test set, $C_{T} = |S|/|Pairs(T)|$.

\end{eqlist}

\newtheorem{lemma}{Lemma}
\begin{lemma} For entity resolution systems satisfying the representativity property, every record pair that directly matches will end up in the same entity.

\begin{equation}
T_M \in R.
\end{equation}
\end{lemma}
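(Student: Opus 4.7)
The plan is to argue by induction on the sequence of merges performed by the entity resolution algorithm, showing that once $r_i \approx r_j$ holds at the start, this match relation is preserved (in fact, strengthened) as $r_i$ and $r_j$ are absorbed into larger merged records by representativity. The conclusion is then forced by the fact that a terminated ER run leaves no two distinct clusters whose representatives match.

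More concretely, fix any pair $(r_i, r_j) \in T_M$, so that $r_i \approx r_j$ by hypothesis. At any stage of the iterative match-and-merge process, let $r_I$ denote the representative record of the cluster currently containing $r_i$, obtained by a chain of merges $r_I = \langle r_i, r_{i_1}, r_{i_2}, \ldots \rangle$, and similarly $r_J$ for $r_j$. First I would show by induction on the number of merges building $r_I$ that $r_I \approx r_j$: the base case is immediate, and the inductive step follows directly from the representativity property applied with $r_4 := r_j$ (since merging $r_I$ with an additional record can only preserve its matches with $r_j$). A symmetric induction on the merges building $r_J$, this time taking $r_4 := r_I$ in the representativity clause, yields $r_I \approx r_J$.

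Therefore, at every intermediate stage of the algorithm, the representatives of the clusters containing $r_i$ and $r_j$ satisfy $r_I \approx r_J$. Since the ER process terminates only when no two distinct cluster representatives match each other, $r_I$ and $r_J$ cannot remain in separate clusters at termination; the algorithm must have merged them at some point. Hence $r_i$ and $r_j$ share an entity in $R$, which is the statement $(r_i, r_j) \in R$, and since this holds for an arbitrary element of $T_M$ we conclude $T_M \subseteq R$.

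The main obstacle I anticipate is the bookkeeping for the representative records: one has to be careful that the $r_I$ at each stage is genuinely the iterated merge of $r_i$ with other records (which requires associativity and commutativity from ICAR to make the merge order irrelevant), so that the induction step in the first paragraph is a legitimate single application of representativity rather than a hidden multi-step argument. Everything else is essentially unwinding definitions.
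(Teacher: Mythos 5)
Your proposal is correct and is essentially the same argument as the paper's: both rest on the observation that representativity lifts the match $r_i \approx r_j$ to the merged records containing them, so the algorithm cannot terminate with those two clusters unmerged. The only difference is presentational --- the paper phrases it as a one-line contradiction ($\left<r_1,\dots\right> \not\approx \left<r_2,\dots\right>$ contradicts representativity), whereas you make explicit the induction over merge steps that this single sentence implicitly requires, which is a legitimate and arguably more careful rendering of the same proof.
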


Additional pairs in $R$ can occur from chains of matches (i.e.\ $r_1 \approx r_2$, $r_2 \approx r_3$, thus $(r_1, r_3) \in R$) and from merging (see Table \ref{tab:canonical}). However, we are unable to make strong claims about the additional matches since composite records do not occur in the validation set.

\begin{proof}
Suppose on the contrary there exists a pair of records $(r_1, r_2)$, such that $(r_1, r_2) \in T_M$ but  $(r_1, r_2) \not\in R$. In other words, $r_1 \approx r_2$ and they are resolved to separate entities $I_1 = \left<r_1, ....\right>$ and $I_2 =\left<r_2, ....\right>$. Since these clusters were not merged in the ER process, $\left<r_1, ....\right> \not\approx \left<r_2, ....\right>$, which contradicts the representativity property.
\end{proof}

\newtheorem{theorem}{Theorem}
\begin{theorem} \label{theorem:precision}
The pairwise precision of an entity resolution result can be lower bounded by:
\begin{equation}
\mathbb{E}\left[Prec(R,S)\right] \geq \frac{|T_M|}{|R|}\left(\frac{C_T(1-C_V)\mathbb{E}\left[Prec(V_M, V_S)\right]}{C_V(1-C_T) + (C_T-C_V)\mathbb{E}\left[Prec(V_M, V_S)\right]}\right).
\label{eq:prec-lower-bound}
\end{equation}
\end{theorem}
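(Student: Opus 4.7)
The plan is to start from Lemma~1 and strip the bound down to a statement purely about the match function's precision, then convert between class balances algebraically. First I would write
\[
Prec(R,S) \;=\; \frac{|R \cap S|}{|R|} \;\geq\; \frac{|T_M \cap S|}{|R|} \;=\; \frac{|T_M|}{|R|}\cdot\frac{|T_M \cap S|}{|T_M|} \;=\; \frac{|T_M|}{|R|}\,Prec(T_M, S),
\]
where the inequality uses Lemma~1 ($T_M \subseteq R$), so no pair in $T_M$ can be lost when counting true positives in $R$. The ratio $|T_M|/|R|$ is observable from the output clustering, so the remaining work is to lower bound $Prec(T_M, S)$ using only $Prec(V_M, V_S)$ and the two class balances.

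The key structural idea is that the match function's class-conditional behavior (its true positive rate $\alpha := P(r\!\approx\!r' \mid y=1)$ and false positive rate $\beta := P(r\!\approx\!r' \mid y=0)$) does not depend on the class balance, because pairs are drawn i.i.d.\ from $p(x|y)$. Under this assumption, the population-level precision of the match function on a set with class balance $C$ is
\[
\frac{C\alpha}{C\alpha + (1-C)\beta}.
\]
I would apply this identity twice: once on the validation set with balance $C_V$ to express $\mathbb{E}[Prec(V_M, V_S)]$ in terms of $\alpha/\beta$, and once on the test set with balance $C_T$ for $\mathbb{E}[Prec(T_M, S)]$. Solving the validation equation for the ratio gives $\alpha/\beta = C_V(1 - p_V)/((1-C_V)\,p_V)^{-1}$, i.e.\ $\beta/\alpha = C_V(1-p_V)/((1-C_V)p_V)$ with $p_V := \mathbb{E}[Prec(V_M, V_S)]$. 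Substituting this into the test-set expression and simplifying produces exactly the parenthesized factor in \eqref{eq:prec-lower-bound}.

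The main obstacle is handling expectations rigorously. The identity $Prec = C\alpha/(C\alpha + (1-C)\beta)$ is a ratio of expectations, whereas the theorem states an inequality on $\mathbb{E}[Prec]$, which is an expectation of a ratio. I would either assume the validation and test sets are large enough that the sample precision concentrates at the population ratio, or condition on the realized match function so that the numerator and denominator counts are sums of independent Bernoullis with known means; either way the argument reduces to the deterministic algebraic manipulation above, applied inside the expectation, with the inequality from Lemma~1 giving the ``$\geq$'' in the final bound. A second, smaller concern is that $C_T$ is only estimated, but this is absorbed into the statement by treating $C_T$ as a given quantity rather than a random variable.
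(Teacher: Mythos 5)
Your proposal follows the paper's own proof essentially step for step: the same application of Lemma~1 to get $|R\cap S|\ge|T_M\cap S|$, the same factorization into $\frac{|T_M|}{|R|}\,Prec(T_M,S)$, and the same final class-rebalancing step, which you simply carry out explicitly (via the rates $\alpha,\beta$ and the identity $Prec = C\alpha/(C\alpha+(1-C)\beta)$, whose algebra does reproduce the parenthesized factor) where the paper only asserts it in one line. Your added care about expectation-of-ratio versus ratio-of-expectations is a legitimate refinement of a point the paper glosses over, but it does not change the route; aside from a transcription slip in the line for $\alpha/\beta$ (immediately corrected in your $\beta/\alpha$ expression), the argument is correct.
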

The bound is composed of two parts. $|T_M|/|R|$ is the fraction of record pairs in the test set entity resolution that directly match, which we can make stronger claims about. $Prec(V_M, V_S)$ is the precision of these direct matches, adjusted for the change in class balance.
\begin{proof}
From Lemma 1 and applying the definitions of pairwise precision for $R$ and $T_M$:
\begin{align}
\mathbb{E}\left[Prec(R, S)\right] &= \mathbb{E}\left[\frac{|R \cap S|}{|R|}\right], \nonumber \\
&\geq \mathbb{E}\left[\frac{|T_M \cap S|}{|R|}\right], \nonumber \\
&= \frac{|T_M|}{|R|}\mathbb{E}\left[Prec(T_M, S)\right], \nonumber \\
&\geq \frac{|T_M|}{|R|}\left(\frac{C_T(1-C_V)\mathbb{E}\left[Prec(V_M, V_S)\right]}{C_V(1-C_T) + (C_T-C_V)\mathbb{E}\left[Prec(V_M, V_S)\right]}\right), \nonumber
\end{align}
where the last step follows from equating the match function validation set performance to the expected match function test set performance using change in match/mismatch class balance.
\end{proof}

Most of the values are straightforward to count from the resolution. $|R|$ is the number of pairs in the clustering output. $|T_M|$ is the number of records that directly match, which by Lemma 1 can be efficiently computed as $\sum_{(r_1, r_2) \in R} r_1 \approx r_2$.
 
The class balance of the validation set $C_{V}$ is known, but we must estimate $C_{T}$. We refer the reader to state-of-the-art results for class prior estimation~\cite{duPlessis2014, Saerens2002}. 

\begin{theorem} \label{theorem:recall}
The pairwise recall of an entity resolution result can be lower bounded by:
\begin{equation}
\mathbb{E}\left[Recall(R,S)\right] \geq \mathbb{E}\left[Recall(V_M, V_S)\right]. \label{eq:recall-lower-bound}
\end{equation}
\end{theorem}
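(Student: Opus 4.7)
The plan is to chain two simple reductions: first use Lemma~1 to pass from the full ER output $R$ down to its direct-match subset $T_M$, and then use the i.i.d.\ assumption on labeled pairs to pass from $T_M$ to the validation analogue $V_M$. Unlike the precision bound, no class-balance correction should be needed here, because recall conditions on the positive class and both $Recall(T_M,S)$ and $Recall(V_M,V_S)$ are estimators of the same quantity: the true positive rate of the match function on pairs drawn from $p(x\mid y=1)$.

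First I would apply Lemma~1, which gives $T_M \subseteq R$. Intersecting with $S$ preserves the inclusion, so $|R\cap S|\geq |T_M\cap S|$, and dividing through by $|S|$ yields
\begin{equation}
Recall(R,S) \;=\; \frac{|R\cap S|}{|S|} \;\geq\; \frac{|T_M\cap S|}{|S|} \;=\; Recall(T_M, S).
\end{equation}
Taking expectations preserves this inequality, so the problem reduces to showing $\mathbb{E}[Recall(T_M,S)] \geq \mathbb{E}[Recall(V_M,V_S)]$; in fact I would argue equality.

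Next I would interpret both recalls as empirical means of the same Bernoulli random variable. A true positive validation pair contributes to $|V_M\cap V_S|/|V_S|$ exactly when the match function fires on it, and a true positive test pair contributes to $|T_M\cap S|/|S|$ under the same event. Because the pairs $x$ are drawn i.i.d.\ from $p(x\mid y)$, the conditional distribution $p(x\mid y=1)$ governing the positive test pairs in $S$ is the same one governing the positive validation pairs in $V_S$. Hence both sample means have the same expectation, namely the match function's true positive rate, so $\mathbb{E}[Recall(T_M,S)] = \mathbb{E}[Recall(V_M,V_S)]$, and combining with the Lemma~1 step completes the proof.

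The main obstacle is step three, which depends critically on the i.i.d.\ assumption made at the start of Section~\ref{sec:bounds}. In particular, the argument implicitly treats the pairs in $S$ as drawn from the same $p(x\mid y=1)$ as the labeled positives in $V_S$; if the test set had systematically different pair statistics (e.g., because merges create composite records whose distribution differs from raw records), the equality $\mathbb{E}[Recall(T_M,S)] = \mathbb{E}[Recall(V_M,V_S)]$ could fail. Since $T_M$ is defined over raw test records rather than merged ones, I believe the assumption is enough, and this is precisely why the recall bound avoids any representativity-induced inflation on the right-hand side.
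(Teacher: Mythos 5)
Your proposal is correct and follows essentially the same route as the paper: apply Lemma~1 to get $T_M \subseteq R$, hence $Recall(R,S) \geq Recall(T_M,S)$, then equate $\mathbb{E}[Recall(T_M,S)]$ with $\mathbb{E}[Recall(V_M,V_S)]$ because recall conditions only on positive pairs and so needs no class-balance correction. Your expanded justification of the final equality (both recalls as empirical means of the match function's true positive rate under $p(x\mid y=1)$) simply makes explicit what the paper states in one sentence.
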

In other words, the recall on the validation set already forms a lower bound for the pairwise recall on the test resolution.
\begin{proof}
From the definitions of pairwise recall for $T_M$ and $R$ and then applying Lemma 1:
\begin{align}
\mathbb{E}\left[Recall(R,S)\right] &= \mathbb{E}\left[\frac{|R \cap S|}{|S|}\right], \nonumber \\
&\geq \mathbb{E}\left[\frac{|T_M \cap S|}{|S|}\right], \nonumber \\
&= \mathbb{E}\left[Recall(T_M, S)\right], \nonumber \\
&= \mathbb{E}\left[Recall(V_M, V_S)\right], \nonumber
\end{align}
where the last step does not require class rebalancing because recall is not a function of class balance (unlike precision, it only depends on the positive pairs).
\end{proof}

A lower bound on pairwise $F_1$ (the harmonic mean of pairwise precision and recall) can be computed with the two former lower bounds. We will focus more on measuring both pairwise precision and recall as they are more informative than the aggregated $F_1$ metric.

\section{Optimal Merge Function}\label{sec:optimal}
Given any match function $m$ satisfying the idempotence and commutativity properties, we will prove a merge function and `wrapper' match function that optimize the estimated lower bounds. Since the idempotence property is trivially satisfied for any match function by checking for identical records and the commutativity property is satisfied by checking both directions $r_1 \approx r_2$ and $r_2 \approx r_1$, this essentially holds for all pairwise match functions. These match and merge functions form a conservative strategy, but provide the lower bound optimal performance given only labeled pairs.

We consider the original set of records $R = \{r_1, \dotsc, r_n\}$ and use notation $o$ for a record formed by merging at least two other records.

\begin{theorem} \label{theorem:optimal}
For any match function, the pairwise precision, recall, and $F_1$ estimated lower bounds are optimal for the merge function:
\begin{equation}
\left<o_1, o_2\right> = \bigcup_{r_i \in o_1, o_2} r_i
\end{equation}
The corresponding `wrapper' match function between $o_1$ and $o_2$ is:
\begin{equation}
o_1 \approx o_2 = \max_{r_i \in o_1, r_j \in o_2} m(r_i, r_j).
\end{equation}
\end{theorem}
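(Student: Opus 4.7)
The plan is to exploit the fact that the lower bounds in Theorems~\ref{theorem:precision} and~\ref{theorem:recall} depend on the choice of merge and wrapper only through $|R|$ and $|T_M|$. All validation-set quantities are fixed by the base match $m$ and the labeled pairs; and $|T_M|$ is determined entirely by $m$, since the proposed wrapper reduces to $m$ on singletons and $T_M$ is a set of pairs of original records in $T$. Consequently, the recall bound $\mathbb{E}[Recall(V_M,V_S)]$ is a constant that every ICAR-respecting scheme achieves, while the precision bound is optimized precisely by minimizing $|R|$ subject to the constraint forced by Lemma~1.

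First I would verify that the proposed merge and wrapper satisfy ICAR, so that Lemma~1 applies. Idempotence, commutativity, and associativity are immediate from the algebraic properties of set union and of maximum. Representativity follows from monotonicity of the max: enlarging $o_1$ to the composite $\langle o_1, o_2 \rangle = o_1 \cup o_2$ can only weakly increase the maximum pairwise score against any fixed $o_4$, so $o_1 \approx o_4$ implies $\langle o_1, o_2\rangle \approx o_4$. Next, I would argue that the iterative match/merge process under this scheme terminates precisely at the partition whose blocks are the connected components of the undirected graph on $T$ with edge set $T_M$: the wrapper fires across two current clusters if and only if some $T_M$-edge bridges them, so merges continue until every cluster is a full connected component. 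Hence the output $R$ is exactly the set of within-component pairs.

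Finally, I would show this $R$ is the smallest possible under representativity. For any competing ER output $R'$ built from the same $m$, Lemma~1 gives $T_M \subseteq R'$, and $R'$ is a disjoint union of within-cluster cliques induced by its partition of $T$. Any such cluster must therefore contain the entire transitive closure of its records under $T_M$, so $R'$ includes every pair inside a $T_M$-connected component; hence $|R'| \geq |R|$. This maximizes $|T_M|/|R|$ and therefore the precision bound of Theorem~\ref{theorem:precision}; the recall bound of Theorem~\ref{theorem:recall} is optimized trivially since it does not depend on the merge/wrapper; and the $F_1$ bound, being monotonically increasing in both precision and recall, is optimized as well.

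I expect the main obstacle to be rigorously identifying the iterative output with the connected-components partition irrespective of the order in which candidate pairs are processed. The ICAR-guaranteed determinism of the output (per Benjelloun et al.) together with the bridging observation above should close this step cleanly, but some care is needed to show that no merge-ordering can leave a cross-component pair unmerged or produce a cluster that strictly exceeds a connected component.
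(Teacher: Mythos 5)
Your proof is correct and rests on the same underlying insight as the paper's --- that the bounds depend on the merge/wrapper choice only through $|R|$ (since $|T_M|$ and all validation quantities are fixed by the base match $m$), and that representativity forces at least the proposed matches --- but your execution is noticeably more complete. The paper argues two local ``directions'': failing to match when some constituent pair matches contradicts representativity, and any additional match ``may increase $|R|$, thus decreasing $Prec(R,S)$,'' which is the weakest step since it only shows extra matches cannot help, not that the proposed scheme is globally minimal. You replace that step with a genuine optimality argument: Lemma~1 forces $T_M \subseteq R'$ for any competing ICAR-respecting output, $R'$ is a union of within-cluster cliques, hence $R'$ contains the full transitive closure of $T_M$, i.e.\ every within-component pair, so $|R'| \geq |R|$ for the connected-components output. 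You also explicitly verify that the proposed max-wrapper and union-merge themselves satisfy representativity (via monotonicity of the max under enlarging a cluster), which the paper never checks --- it only shows any valid wrapper must make at least these matches. The connected-components characterization appears in the paper only as an informal remark after the proof; promoting it to the centerpiece of the minimality argument is what lets you close the gap cleanly. The one caveat you correctly flag --- order-independence of the iterative process --- is handled by the determinism guaranteed for ICAR systems in Benjelloun et al., so your proof goes through.
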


\begin{proof}
We will show both directions, that the optimal merge function and match `wrapper' must make at least these matches to satisfy the ICAR properties, and that any additional matches will decrease the estimated performance lower bound. By the definition of the set union operator, the merge function is associative. The rest of the proof will focus on the representativity property.


\emph{Direction 1}: We are constrained by match and merge functions that satisfy the ICAR properties. In the first direction, we will show these are the minimum matches required to satisfy representativity. Assume on the contrary: there exist two composite records $o_1$ and $o_2$, such that $o_1 \not\approx o_2$ but one pair of their constituent records match, i.e.\ $r_i \approx r_j$, for some $r_i \in o_1$, $r_j \in o_2$. By definition, this contradicts the representativity property.

\emph{Direction 2}: In the second direction, we will show any additional matches will increase $|R|$ and thus decrease the estimated pairwise precision lower bound. Assume there exist two records $o_1$ and $o_2$, such that $o_1 \approx o_2$ but none of their constituent records match, i.e.\ $r_i \not \approx r_j$, $\forall r_i \in o_1$, $r_j \in o_2$. The additional match $o_1 \approx o_2$ may increase $|R|$, thus decreasing $Prec(R, S)$.
\end{proof}

The simplicity of this approach is derived from only claiming performance knowledge of direct record matches from the validation set performance. Interestingly, this ER system is equivalent to finding all connected components, where each edge $A_{ij} = r_i \approx r_j$ in the adjacency matrix $A$. We stress that though this may optimize the estimated lower bound performances, it does not necessarily guarantee better performance. However, if ground truth is not available for a dataset of comparable size to the deployed system, then this is now a theoretically well motivated approach.

A significant benefit of Theorem~\ref{theorem:optimal} is the provided match function need not satisfy the very restrictive representativity property. Further, since the idempotence and commutativity properties are trivial to satisfy, $m$ can be essentially any match function. For example, one could use more complex machine learning based match functions (e.g.\ kernelized SVM, random forests) and featurizations which may not have intuitive merge operations (e.g.\ word2vec \cite{Mikolov2013}, Brown clustering \cite{Brown1992}). Using less restrictive match functions undoubtedly enables better $Prec(V_M, V_S)$ and $Recall(V_M, V_S)$, further improving the lower bounds.

\section{Experiments} \label{sec:results}

We conducted experiments on multiple datasets with known ground truth to empirically demonstrate the tightness of the estimated lower bounds. Specifically, we are interested in optimizing ER model parameters over the estimated lower bounds and over the ground truth metrics to show they achieve similar results. 

\subsection{Datasets} \label{sec:datasets}

We used one synthetic and three real world datasets with known ground truth for our experiments, as described in Table \ref{tab:datasets}. For all these datasets, the goal of entity resolution is to find records describing the same entity (e.g.\ restaurant, product, or person). For the synthetic dataset, we generated each record's features using a feature vector unique to its respective entity, plus random Gaussian noise. 

\begin{table}[b]
\centering
\begin{threeparttable}[b]
\caption{Datasets used in the experiments}
\label{tab:datasets}
\begin{tabular}{l r r r} \hline
Dataset&\# dim&\# records & \# matches \\ \hline
Synthetic & 10 & 1000 & 4500\\
Restaurant\tnote{1} & 4 & 864 & 112 \\
Abt-Buy\tnote{2} & 3 & 2173 & 1118 \\
Escort (subset) & 20 & 10000 & 10596 \\
\hline\end{tabular}
\footnotesize{
\begin{tablenotes}
\item[1] http://www.cs.utexas.edu/users/ml/riddle/data/restaurant.tar.gz
\item[2] http://dbs.uni-leipzig.de/file/Abt-Buy.zip
\end{tablenotes}}
\end{threeparttable}
\end{table}

\begin{figure*}[p]
\centering
\subfloat[Synthetic precision]{
  \includegraphics[width=0.44\linewidth]{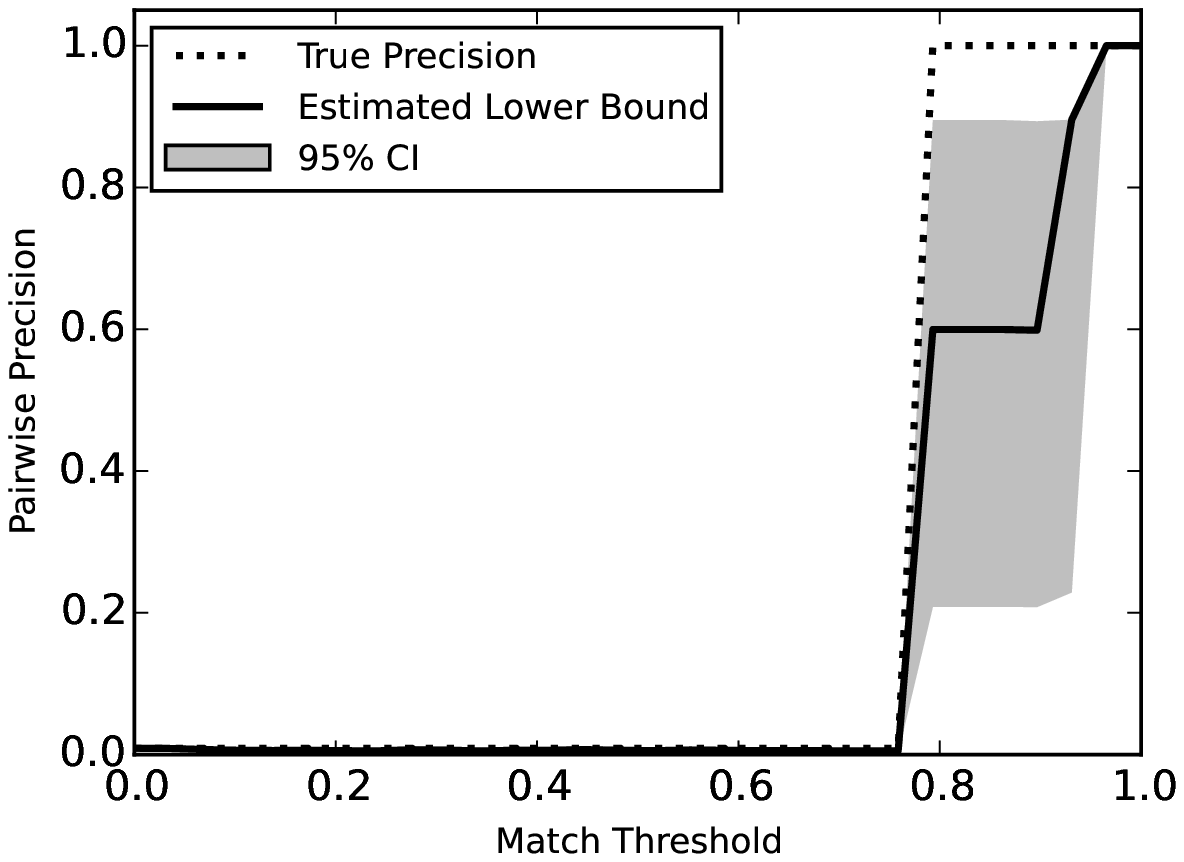}
}
\subfloat[Synthetic recall]{
  \includegraphics[width=0.44\linewidth]{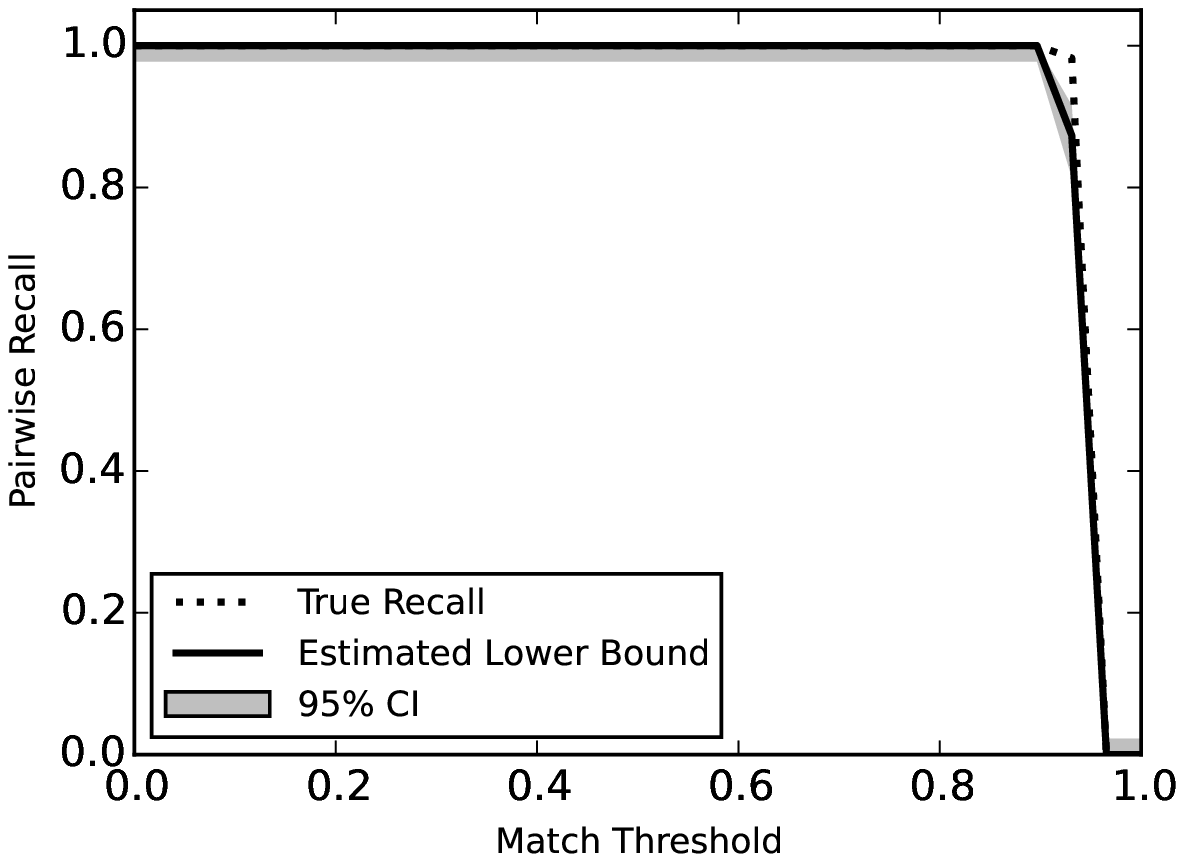}
}
\vspace{-3mm}
\subfloat[Restaurant precision]{
  \includegraphics[width=0.44\linewidth]{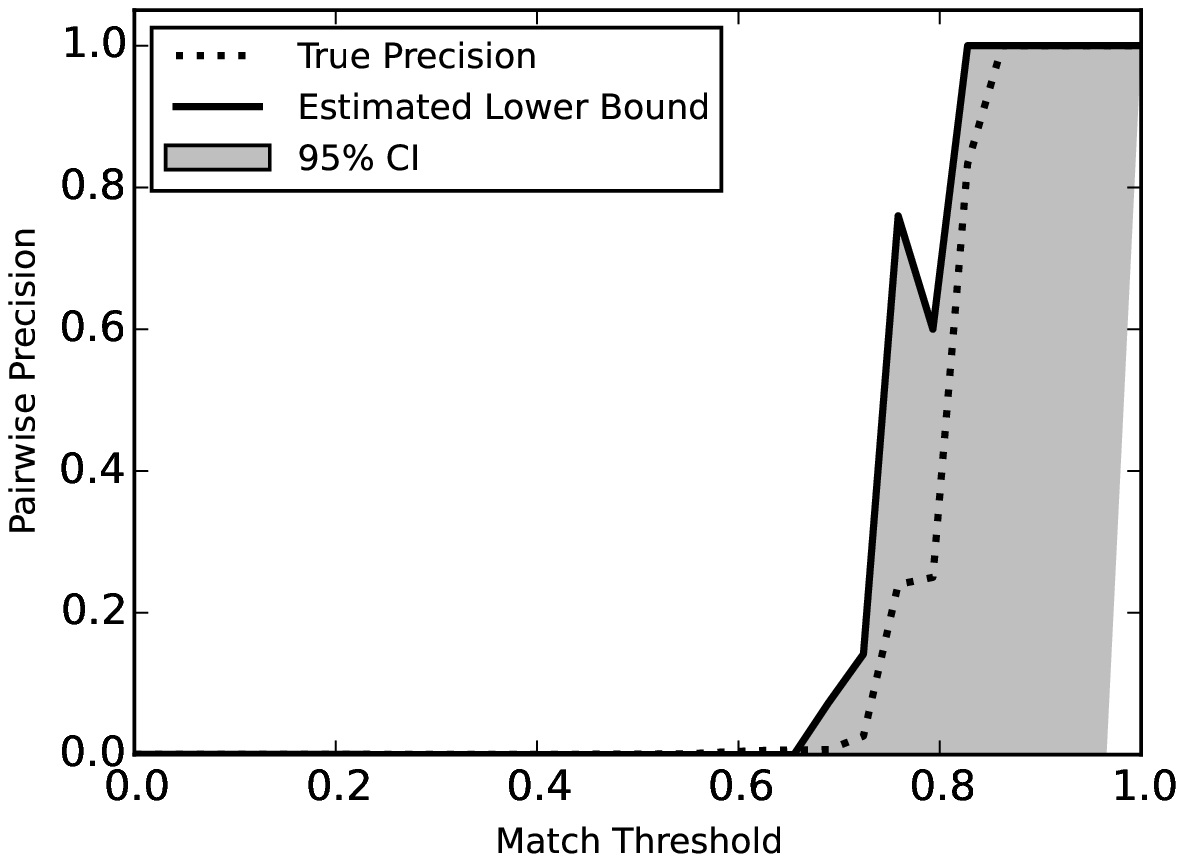}
}
\subfloat[Restaurant recall]{
  \includegraphics[width=0.44\linewidth]{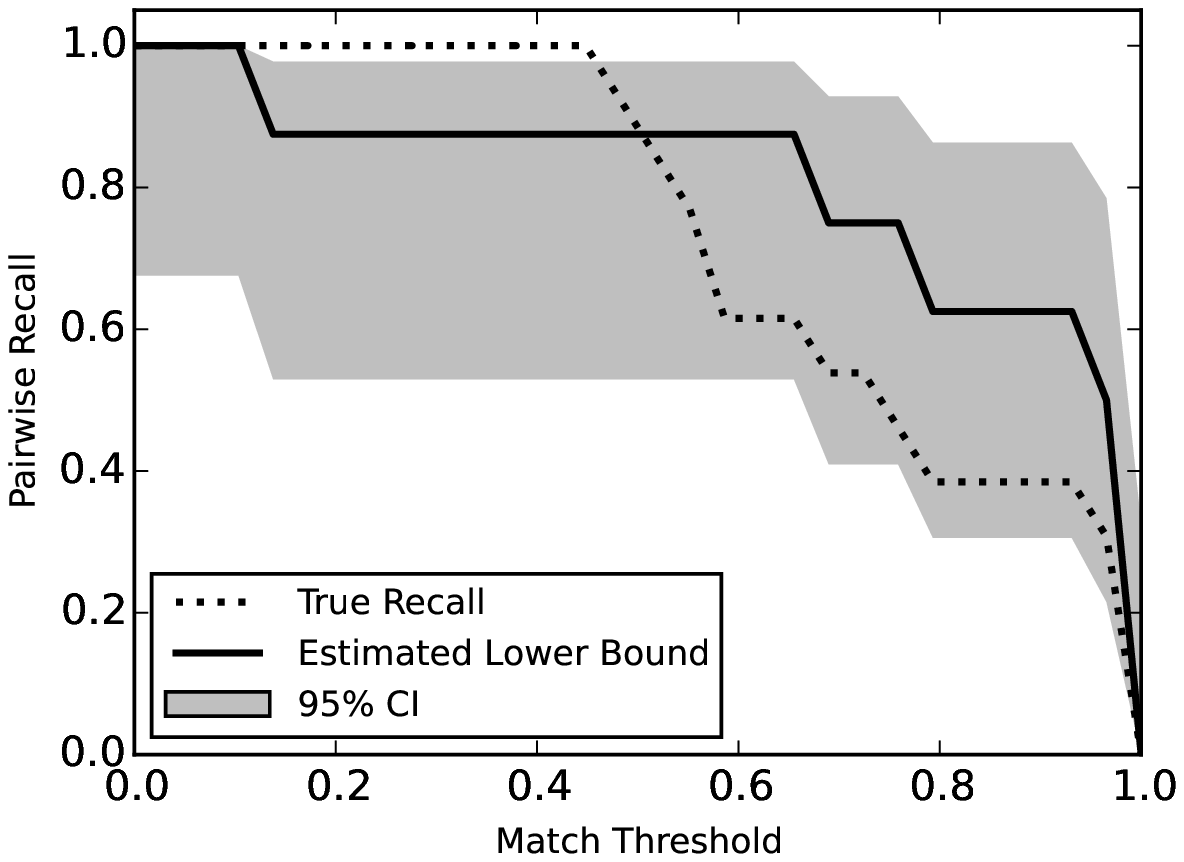}
}
\vspace{-3mm}
\subfloat[Abt-Buy precision]{   
  \includegraphics[width=0.44\linewidth]{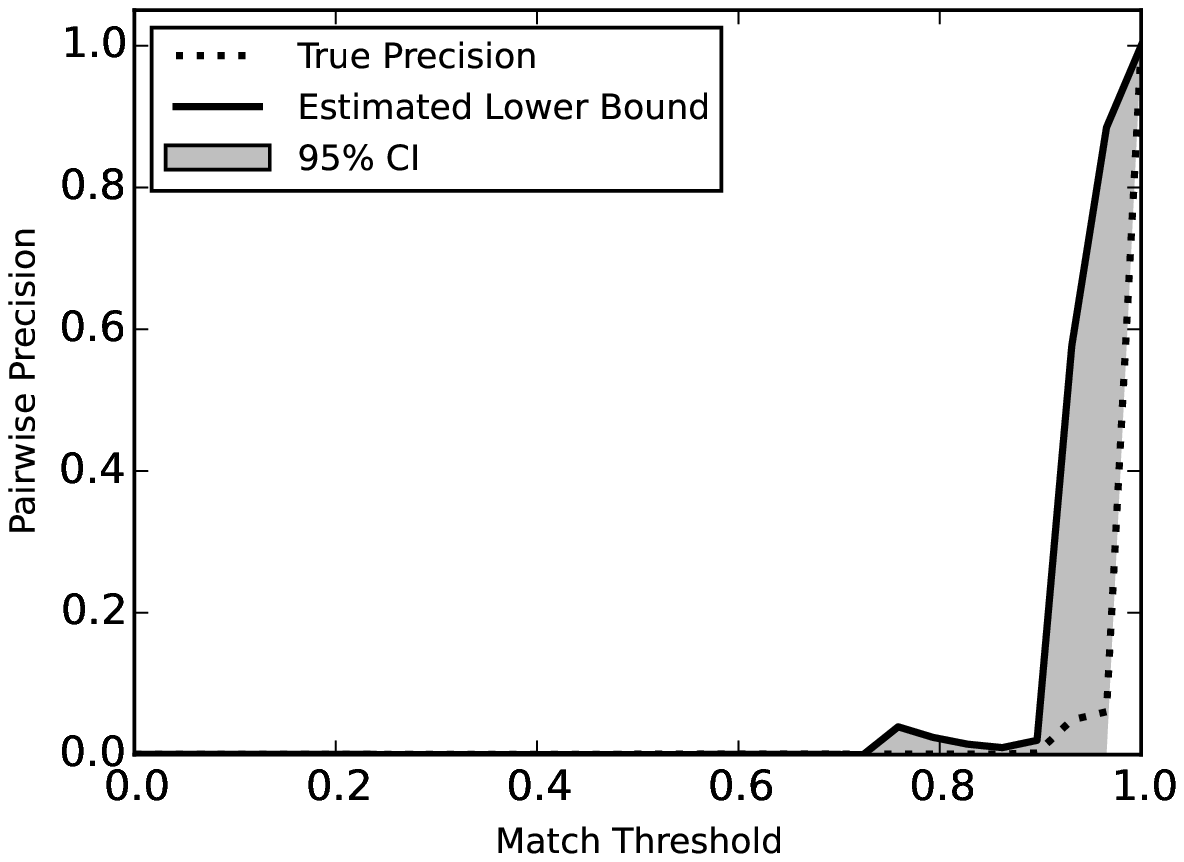}
}
\subfloat[Abt-Buy recall]{
  \includegraphics[width=0.44\linewidth]{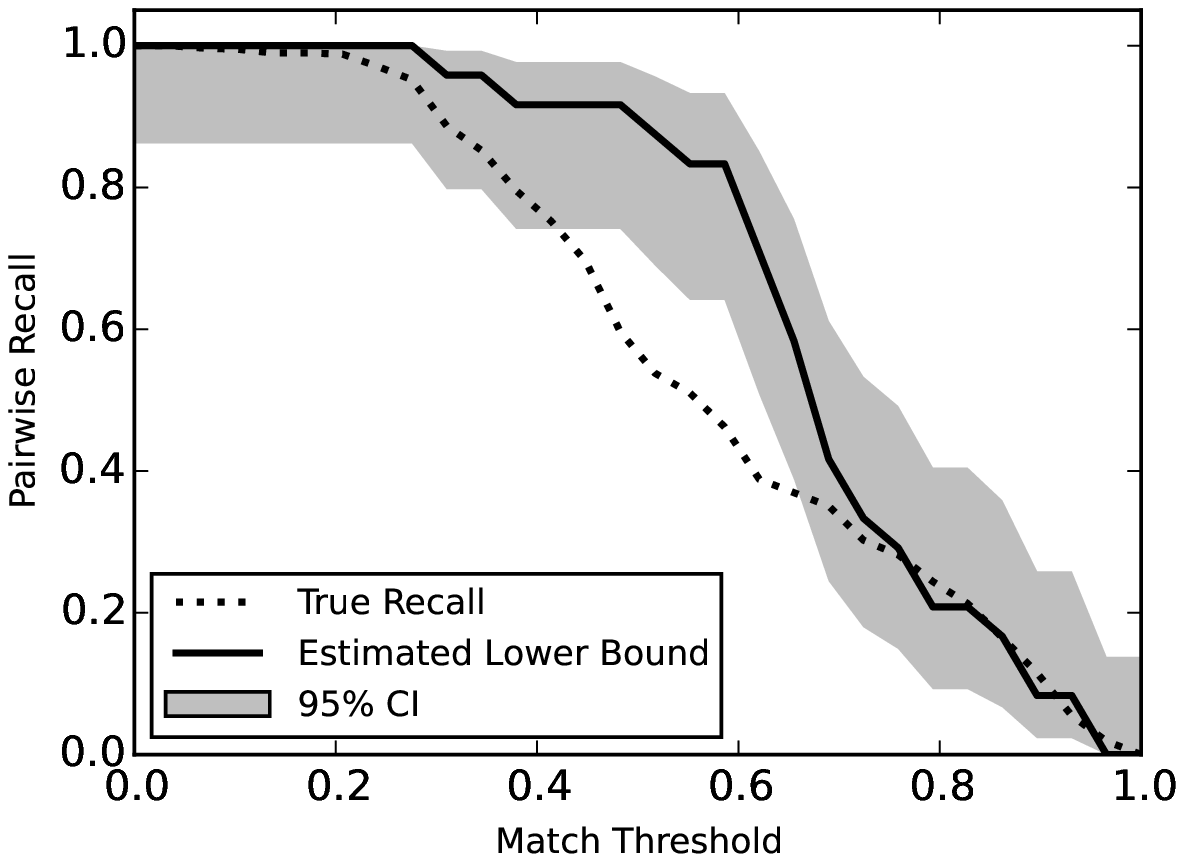}
}
\vspace{-3mm}
\subfloat[Escort precision]{   
  \includegraphics[width=0.44\linewidth]{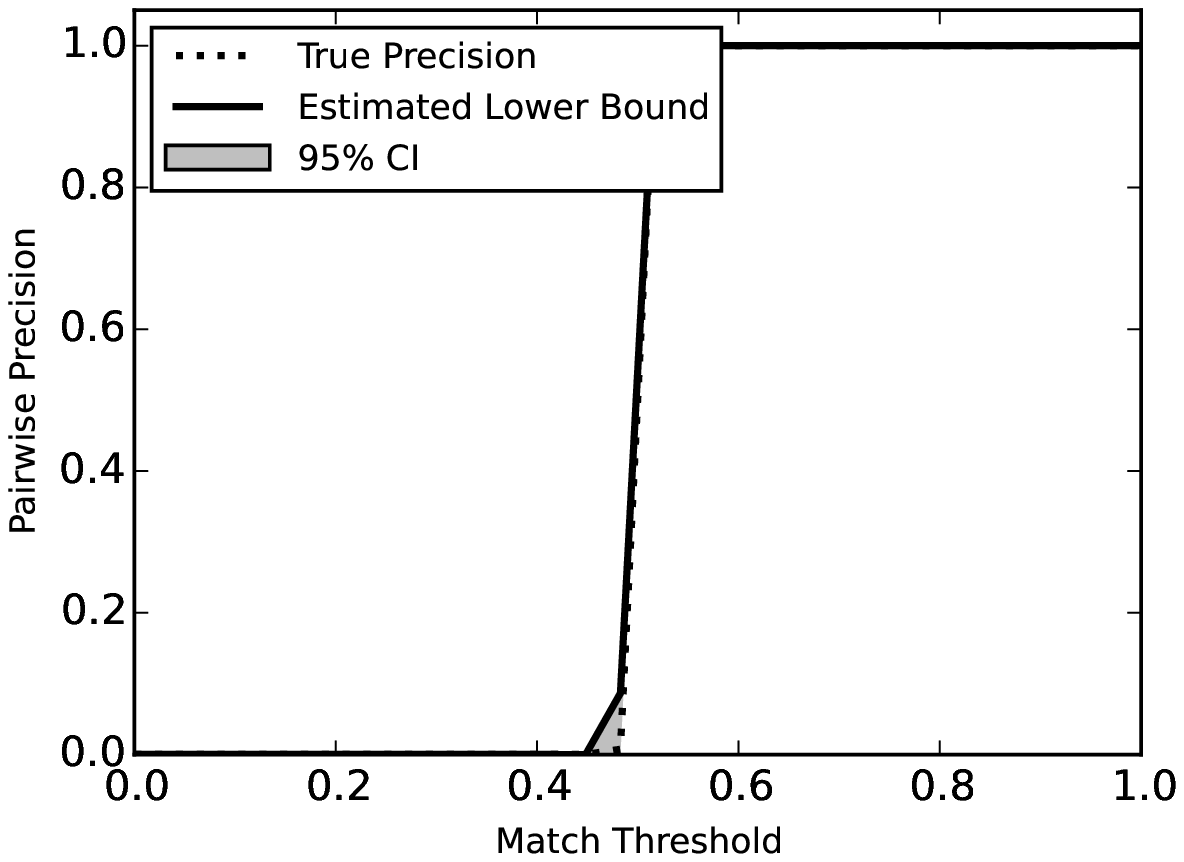}
}
\subfloat[Escort recall]{
  \includegraphics[width=0.44\linewidth]{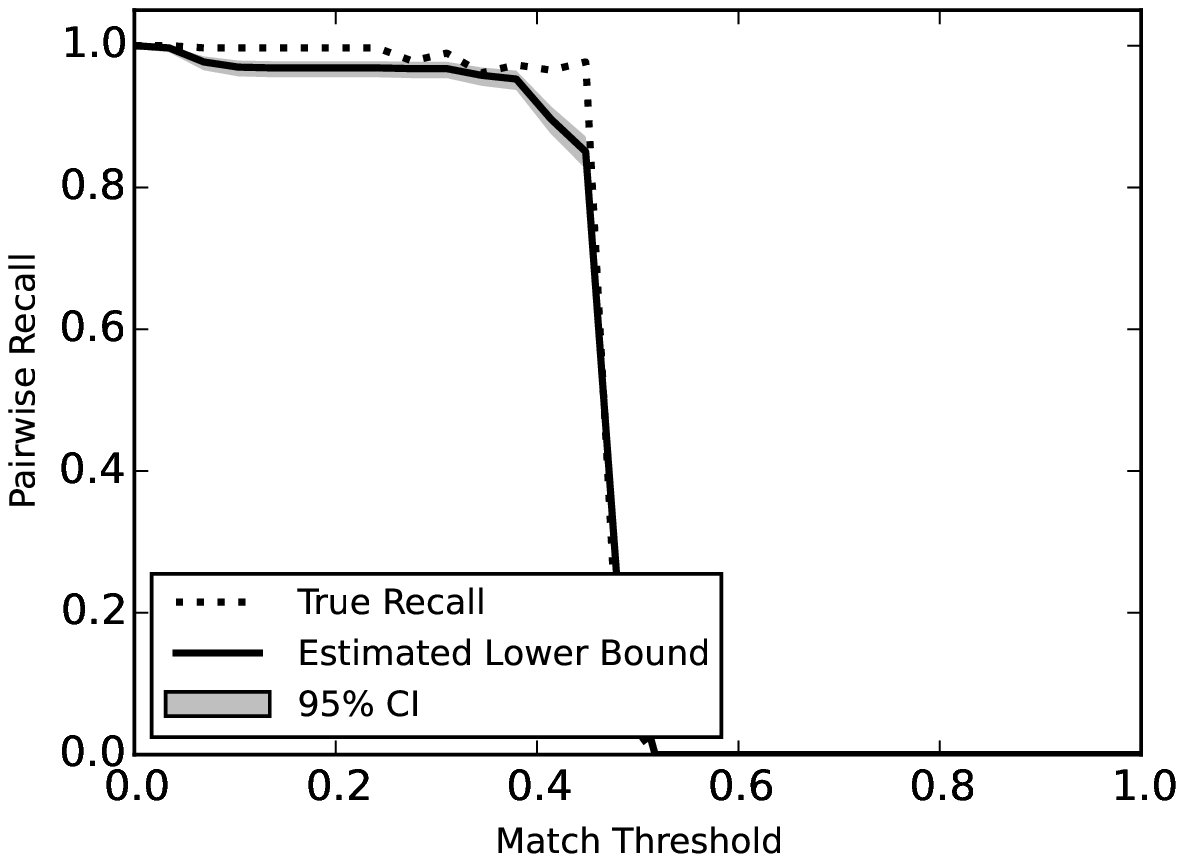}
 }
\caption{Experimental results demonstrate model parameters can be tuned to optimize estimated lower bound pairwise precision and recall of the test set. The resulting estimated lower bound is close to the true performance. Pairwise $F_1$ is not shown because it is the harmonic mean of the two former metrics, and is thus less informative. 
}
\label{fig:results1}
\end{figure*}

Unlike general machine learning tasks, publicly available entity resolution datasets with known ground truth are extremely limited, and do not number beyond several thousand records. The restaurant dataset is one of the earliest ER tasks discussed in literature \cite{Tejada2001}, and still used today \cite{Wang2012, Kopcke2008}. 
Unfortunately, the dataset is also relatively small -- numbering only 864 records and five features (name, phone number, street address, city, cuisine). We threw away the phone number feature because it made the problem too simple. The Abt-Buy dataset is more recent, larger at 2173 records, and used extensively in current research \cite{Wang2012, Kopcke2010}.  It consists of product information from two retailers, including product name, description, and price.

Both the Restaurant and Abt-Buy datasets are a class of entity resolution known as clean-clean, wherein two `clean' datasets with completely resolved entities are merged together \cite{Papadakis2013}. This problem is easier than the more general problem of resolving entities with an unknown number of records. To formulate these datasets in a more general context, we merged them together into a single `dirty' dataset and ignored the advantageous `clean-clean' knowledge in our experiments.

Lastly, we evaluated a subset of a personal ads dataset scraped from escort advertising websites over the past few years \cite{Greenemeier2015}. 
We used natural-language-processing algorithms to extract 20 features, such as name, age, location, and hair color of the person being advertised. For ground truth, we used a subset of the data containing phone number matches as a proxy label. Although phone numbers will not allow us to discover the full ground truth, it is reasonable to assume ads with the same phone number belong to the same entity (i.e. person or group) because those numbers are the means of contact for potential customers.


\subsection{Entity Resolution}
We used the R-Swoosh algorithm for our ER systems \cite{Benjelloun2009}. For the merge function, we simply used the set union of the respective features. For example, in Table \ref{tab:canonical}, $\left<r_1, r_2\right>$ would be [\{J., John\}, \{D., Doe\}, \{377-8328\}].

For the match function, we trained a binary logistic regression classifier using known matches and mismatches in the training dataset. Like all pairwise entity resolution algorithms, it operates on pairwise features, which we computed from two records' features using either a binary match (e.g.\ state, hair color), numerical difference (e.g.\ ages, weights), or Levenshtein string edit distance (e.g.\ name) of each feature pair. If a record had multiple of a particular feature from a merge operation, we used the closest feature match. %

Another benefit of using a probabilistic match function is the choice of parameters is reduced to a single value: the cut-off threshold. 
The choice of cut-off threshold is a classic trade-off between precision and recall -- an ideal setting to examine the results of our bounds.

\subsection{Results}
To examine the efficacy of the estimated lower bound in tuning an entity resolution system, we evaluated the true and lower bound performances across tightly spaced intervals of match cut-off thresholds, as shown in Figures \ref{fig:results1}. The tightness of the bounds demonstrate two important qualities. First, they enable the optimization of model parameters (e.g.\ cut-off threshold) using the estimated lower bound. Though this may not necessarily result in the true (unknown) optimal parameters, it will result in the best estimated lower bound. Second, it enables enforcing a level of acceptable quality prior to the use of any entity resolution results. 

One may be surprised to see the estimated lower bound exceed the true performance. This is, indeed, possible because of uncertainty in estimations of $Prec(V_M, V_S)$, $Recall(V_M, V_S)$ and $C_T$. The 95\% confidence intervals are obtained via the propagation of validation set Wilson scores for precision and recall \cite{Wilson1927}. Uncertainty increases as the gap between validation set and test set sizes widen, a phenomenon observable in Figure \ref{fig:synthetic-degredation}. For very small datasets such as Restaurant, we were restricted to using minimal validation samples due to the small number of labels. However, for larger experiments such as Abt-Buy and Escort, we could afford hundreds or thousands of validation samples, significantly reducing uncertainty. This is also theoretically motivated by the shift in class balance in Theorem 1. 

The four experiments demonstrate different ER behavior. The synthetic experiment has a narrow range of model parameters with perfect precision and recall, where performance degrades dramatically outside this range. The Restaurant experiment has a more gradual tradeoff between precision and recall, though there is a significant uncertainty in the lower bound estimate due to the limited number of validation samples. Precision in Abt-Buy quickly degrades, though recall is much more gradual. 
Our bounds correctly capture the need to improve the underlying ER systems for the Abt-Buy and Escort datasets. Without this lower bound, the poor performance on larger datasets would not be evident from smaller tests.

\section{Conclusions}
Performance optimization of scalable entity resolution systems is challenging because unlike other machine learning tasks, there is not a clear understanding of how behavior will change on larger datasets. In this paper, we developed a simple -- yet effective -- method for optimizing lower bound performance using a small set of labeled pairs.

Further, we showed the optimal lower bound strategy for any match function is the connected components problem from graph theory -- a relatively conservative clustering approach compared to many ER systems. We understand that this does not necessarily guarantee better performance, but it does provide a better lower-bound guarantee. For instance, in our original example in Table \ref{tab:canonical}, $r_3$ would have matched neither $r_1$ nor $r_2$. However, when labeled datasets of comparable size to the deployed system are not available, this is now a theoretically well motivated approach.

Our bounds specifically addressed performance of pairwise entity resolution algorithms satisfying the ICAR properties \cite{Benjelloun2009}. Pairwise algorithms are intuitive, easy to implement, and perform an optimal number of pairwise record comparisons. However, they are also only a subset of entity resolution approaches \cite{Getoor2012, Bilenko2005, Ailon2008, Bhattacharya2007, Singla2006, Wang2012}. Further, we only considered pairwise precision, recall, and $F_1$ due to their popularity, intuitive interpretation and mathematical convenience, though other existing metrics have been shown to produce conflicting rankings \cite{Menestrina2010}.

Estimating the lower bounds relies on accurate estimations of several other quantities, including recall and precision on the validation set and class prevalence estimation in the test set. Especially as datasets scale to much larger sizes, our bounds rely on these estimates. As evident in Theorem~\ref{theorem:precision} and in our experiments, uncertainty increases as the gap between validation and testing set sizes widened.


%


\bibliographystyle{unsrt}

{\small
\bibliography{barnes_nips}}

\end{document}